\newcommand{\ignore}[1]{}
\definecolor{grey}{rgb}{0.6,0.3,0.3}
\definecolor{lgrey}{rgb}{0.9,.7,0.7}
\newtheorem{theorem}{Theorem}
\newcommand{\mR}{{\mathbb R}}
\newcommand{\mE}{{\mathbb E}}
\def\spacingset#1{\def\baselinestretch{#1}\small\normalsize}
\title{Probabilistic Kernel Support Vector Machines}
\author{Yongxin Chen, Tryphon T. Georgiou, and Allen R. Tannenbaum
\thanks{Y.\ Chen is with the School of Aerospace Engineering, Georgia Tech, Atlanta, GA; yongchen@gatech.edu}
\thanks{T.T.\ Georgiou is with the Department of Mechanical and Aerospace Engineering, UCI, Irvine, CA; tryphon@uci.edu}
\thanks{A.R. Tannenbaum is with Departments of Computer Science and Applied Mathematics \& Statistics, Stony Brook University, NY; arobertan@cs.stonybrook.edu}}
\begin{document}
\maketitle
\begin{abstract}
We propose a probabilistic enhancement of standard kernel Support Vector Machines for binary classification, in order to address the case when, along with given data sets, a description of uncertainty (e.g., error bounds) may be available on each datum. In the present paper, we specifically consider Gaussian distributions to model uncertainty.  Thereby, our data consist of pairs $(x_i,\Sigma_i)$, $i\in\{1,\ldots,N\}$, along with an indicator $y_i\in\{-1,1\}$ to declare membership in one of two categories for each pair. These pairs may be viewed to represent the mean and covariance, respectively, of random vectors $\xi_i$ taking values in a suitable linear space (typically $\mR^n$). Thus, our setting may also be viewed as a modification of Support Vector Machines to classify distributions, albeit, at present, only Gaussian ones.
We outline the formalism that allows computing suitable classifiers via a natural modification of the standard ``kernel trick.'' The main contribution of this work is to point out a suitable kernel function for applying Support Vector techniques to the setting of uncertain data for which a detailed uncertainty description is also available  (herein, ``Gaussian points'').
\end{abstract}

\section{Introduction and motivation}

The Support Vector methodology is a nonlinear generalization of Linear Classification, typically utilized for binary classification. In this methodology, a training set of data becomes available where each datum belongs to one of two categories while an indicator function identifies the category for each. Imagine points on a plane labeled with two different colors. Solely based on their location and proximity to others of particular colors, one needs to classify new points and assign them to one of the two categories. When the points of the training set in the two categories can be separated by a line, then this line can be chosen to separate two respective regions, one corresponding to each of the two categories. On the other hand, when the ``training'' points of the two categories cannot be separated by a line, then a higher order curve needs to be chosen. The Support Vector methodology provides a rigorous and systematic way to do exactly that.

In the case where no simple boundary can be chosen to delineate regions for the points in the two categories, and a simple classification rule is still sought based on location, then a smooth boundary can be selected to define regions that contain the vast majority of points in the respective categories. Thus, simplicity of the classification rule (Occam's razor) may be desirable, as the domains corresponding to the two categories may not be completely separate and a misclassification error in that case must be deemed acceptable. This again can be handled by a suitable relaxation of Support Vector Machines (SVM). At present the rather extensive literature on the subject continues to expand unabated, and the same applies to the growing library of options with regard to formulations and related methodologies \cite{SteChr08,chang2011libsvm,Suy01}, as well as to applications in system identification and decision making \cite[Chapter 15]{kung2014kernel}.

Initially, algorithms were developed assuming that samples are error-free. However, evidently, in many practical applications such an assumption is unrealistic. To this end, a number of approaches have been proposed. For a survey and other directions in developing SVM's designed to address uncertain data sets, due to sampling, modeling, or instrumentation errors, or class label uncertainty, see \cite{wang2014survey,bi2005support,trafalis2007robust,xie2018uncertain} and the references therein.

The purpose of this paper is to add to this growing literature by providing yet another angle to how one can incorporate uncertainty in data sets to SVMs. Specifically, we consider the paradigm where measurements are provided along with error bars that quantify expected margins. This is the case when uncertain is recorded at the time data is collected (e.g., by keeping track of sensors being used or physical conditions that hinder precision). Alternatively, a datum may be an empirical probability distribution, which in turn may be abstracted as the mean along with a covariance matrix to keep track of the spread. Either way, we assume that our data points are approximated by a Gaussian distribution. Thus, in such cases, a ``datum'' is now a point in a space that is more complex than $\mathbb R^n$. We seek to devise suitable kernels, and thereby extend the Support Vector methodology accordingly. The non-trivial nature of the problem to devise kernels for data-points on curved metric spaces has been discussed in \cite{feragen2015geodesic}. Yet, as we will see, by introducing a suitable probability law that couples the dataset and thereby takes advantage of the available covariance information, the case of ``Gaussian data points'' allows for a rather simple probabilistic kernel (Theorem 1).

Thus, we postulate that measurements provide vectors $x\in\mathbb R^n$ together with covariance matrices $\Sigma$ that quantify our uncertainty in the value $x$ being recorded. For the purposes of binary classification, we record such pairs $(x,\Sigma)$ along with the information on the category of origin of each, which is given by an indicator value $y=\pm 1$. A datum with a large variance naturally delineates a larger volume that should be associated to the corresponding category, or at least, impact proportionately the drawing of the separation between the two. Prior state-of-the-art does not do that. Thus, in the present note, we propose a new kernel that allows treating measurement uncertainty as part of the data.

{\bf Related work:} One approach to incorporate data uncertainties in SVM is through robust optimization \cite{bi2005support,trafalis2007robust,wang2014survey}, and many related works \cite{wang2014survey} focus on linear SVM. In the robust optimization approach, uncertainties are modeled by a hard bound instead of a soft probability bound. The work that is closest to ours is \cite{xie2018uncertain} where the authors also defined a kernel for uncertain measurements. However, their kernel is defined over histograms which require discretization for Gaussian measurements, and thus do not scale for high dimensional applications.

Below, in Section II, we highlight some background on Support Vector Machines. Section III gives our main result that identifies a kernel function for the case of data provided in the form of ``Gaussian points.'' Several numerical examples are presented in Section \ref{sec:example} to illustrate our framework. It follows by a short concluding remark in Section \ref{sec:conclusion}.

\section{Background}
Support Vector Machines (SVMs) constitute a well-established technique for (binary) classification and regression analysis \cite{scholkopf1999advances}. The main idea is to embed a given (training) data set into a high dimensional space $\mathcal H$, of dimension much higher than the native dimension of the base space $\mathcal X$ and possibly infinite, so that for binary classification, the two classes can be separated with a hyperplane \cite{cover1965geometrical}. Effectively, this {\em separation hyperplane} projects down to the native base space $\mathcal X$, where the data points originally reside, as curves/surfaces/etc.\ that separate the two classes. The imbedding into $\mathcal H$ is effected by a mapping
\[
\varphi\,:\, x\in{\mathcal X} \mapsto \varphi(x)\in{\mathcal H},
\]
where $\varphi(x)$ is referred to as the {\em feature vector}. The space $\mathcal H$ has an inner product structure (Hilbert space) and, naturally, the construction of the separating hyperplane relies on the geometry of $\mathcal H$. However, and most importantly, the map $\varphi$ does not need to be known explicitly and does not need to be applied to the data. All necessary operations, inner product and projections that show up in the classifier and computations, can be effectively carried out in the base space using the so called ``kernel trick.''

Indeed, to accomplish the task and construct the classification rule (via suitable curves/surfaces/etc.), it is sufficient to know the {\em kernel}
\begin{equation}\label{eq:kernel}
k(x,y):= \langle \varphi(x),\varphi(y)\rangle_{\mathcal H},
\end{equation}
with $x,y\in\mathcal X$. It evaluates the inner product in the feature space as a function of the corresponding representatives $x,y$ in the native base space.
Thus, the kernel is a bivariate function which is completely characterized by the property of being positive, in the sense that
for all $x_i\in \mathcal X$, $i\in\{1,\ldots,N\}$, and any corresponding set of values $\alpha_i\in\mathbb R$ (as we are interested in real-valued kernels)
\[
\sum_{i,j=1}^N \alpha_i\alpha_j k(x_i,x_j)\geq 0.
\]
Necessity stems from the fact that the left hand side above is
\[
\langle \sum_{i=1}^N\alpha_i\varphi(x_i),\sum_{j=1}^N\alpha_j\varphi(x_j)\rangle_{\mathcal H}= \|\sum_{i=1}^N\alpha_i\varphi(x_i)\|^2_{\mathcal H}\geq 0.
\]
Sufficiency, in the existence of a feature map $\varphi$ that realizes \eqref{eq:kernel} is a celebrated theorem \cite{aronszajn1950theory}, see also \cite[Theorem 7.5.2]{alpay2015advanced} and \cite[page 30, Theorem 3.1 (Mercer)]{scholkopf1999advances}.

Classification relies on constructing a classifier that is built on a linear functional, when viewed in $\mathcal H$.
It is of the form
\begin{equation}
x\to {\rm sign}( \langle w,\varphi(x)\rangle_{\mathcal H}-b),
\end{equation}
and the value $\pm 1$ aims to differentiate between elements in two categories.
The coefficients $w\in\mathcal H$ and $b\in\mathbb R$ are chosen so that
\[
\{ h\in\mathcal{H} \mid \langle w,h\rangle_{\mathcal H}-b=0\}
\]
is a separating hyperplane of the two subsets
\[
\mathcal{S}_\pm =\{ \varphi(x_i) \mid y_i=\pm 1\}
\]
of the compete (training) data set. Once again, $\varphi(\cdot)$ does not need to evaluated at any point in the construction, existence of such a map is enough, and it is guaranteed by the positivity of the kernel.

The construction of the classifier requires selection of the parameters $w=\sum_{i=1}^Nc_iy_i\varphi(x_i) \in\mathcal H$ and $b\in \mathbb R$. These are chosen
either (``hard margin'') to minimize
\[
\langle w,w\rangle_{\mathcal H} \mbox{ subject to } y_i(\langle w,\varphi(x_i)\rangle_{\mathcal H}-b)\geq 1,
\]
or, to (``soft margin'') minimize
\begin{equation}\label{eq:soft}
\frac{1}{N}\sum_{i=1}^N \max\{0, 1-y_i(\langle w,\varphi(x_i)\rangle_{\mathcal H}-b)\} +\lambda  \langle w,w\rangle_{\mathcal H},
\end{equation}
for all available points in the ``training set.'' The ``hard margin'' formulation coincides with the limit of the ``soft'' formulation as $\lambda\to 0$ when the two clusters are separable. The dual formulation of \eqref{eq:soft} becomes the problem of maximizing
\[
\sum _{i=1}^N c_i-{\frac {1}{2}}\sum _{i,j=1}^N
y_{i}y_{j}c_{i}c_{j}k(x_{i}, x_{j}),
\]
subject to
\[
\sum_{i=1}^Nc_iy_i=0, \mbox{ as well as } 0\leq c_i\leq (2N\lambda)^{-1}
\]
for all $i$.
The coefficients $c_i$ can now be obtained via quadratic programming, and $b$ can be found as \[
b=\sum_{i=1}^N c_iy_ik(x_i,x_j) - y_j
\]
with $j$ corresponding to an index such that $0<c_j<(2N\lambda)^{-1}$. The classification rule then becomes
\begin{equation}
x\to {\rm sign}( \sum_{i=1}^Nc_iy_ik(x_i,x)-b),
\end{equation}
The above follows standard development, see e.g., \cite{scholkopf2001learning} as well as the Wikipedia webpage on Support Vector Machines \cite{wiki}.

\section{Classification of Gaussian points}
The problem we are addressing in the present note is the classification of {\em uncertain data points} into one of two categories, i.e., a binary classification as before. However, a salient feature of our setting is that data are only known with finite accuracy.
Uncertainty is modeled in a probabilistic manner. For simplicity, in this paper, we consider only {\em Gaussian points}. These consist of pairs $(x_i,\Sigma_i)$ representing the mean and variance of a normally distributed random vector $\xi_i$. Regardless of the simplicity, Gaussian points are sufficiently general to cover many real application since most physical measurements do involve random noise that satisfies a Gaussian distribution.

Alternatively, we may think of the data as points on a manifold of distributions (though, only Gaussian at present). These may represent approximations of empirical distributions that have been obtained at various times. An indicator $y_i$ is provided
as usual along with the information of the category that the current datum belongs to.
If we regard it as representing a distribution, we postulate that it arose from experiment involving population $y_i$.

We follow the standard setting of {\em kernel Support Vector Machines} (kSVMs) that was outlined in the background section, which we overlay a probabilistic component.
This {\em Probabilistic} kernel Support Vector Machine (PkSVM)
relies on a suitable modification of the kernel.
To this end we consider the set of ``data points''
\[
\Omega :=\{(x,\Sigma)\mid x\in \mR^n,\,\Sigma\in S_{+,n}\}
\]
with $S_{+,n}$ the cone of non-negative definite symmetric matrices in $\mR^{n\times n}$. We also consider
the family of normally distributed random vectors
\[
\xi\sim N(x,\Sigma)\mbox{ for }(x,\Sigma)\in\Omega.
\]
We utilize the popular exponential Radial Basis Function (RBF) kernel
\begin{equation}\label{eq:gaussian}
k(x,y)=e^{-\frac{1}{2\sigma^2}\|x-y\|^2},
\end{equation}
where the parameter $\sigma>0$ affects the scale of the desired resolution. For any $(x_i,\Sigma_i)\in\Omega, (x_j,\Sigma_j) \in \Omega$, we define random vectors
	\begin{subequations}
	\begin{eqnarray}
	\xi_i &=& x_i + \Sigma_i^{1/2} \epsilon
	\\
	\xi_j &=& x_j + \Sigma_j^{1/2} \epsilon
	\end{eqnarray}
	\end{subequations}
where $\epsilon$ is a zero-mean Gaussian random vector with unit covariance, i.e., $\epsilon\sim N(0,I)$.
Apparently, $\xi_i$ has distribution $N(x_i,\Sigma_i),\,i=1,2$. We then define the kernel
\begin{align}\nonumber
&\kappa((x_i,\Sigma_i),(x_j,\Sigma_j))
\\&=\mE\{k(\xi_i,\xi_j)\}\nonumber\\
\nonumber&=\mE\{e^{-\frac{1}{2\sigma^2}\|\xi_i-\xi_j\|^2}\}
\\\nonumber
&= (2\pi)^{-n/2}\int e^{-\frac{1}{2\sigma^2} \|x_i-x_j+\Sigma_i^{1/2} \epsilon-\Sigma_j^{1/2} \epsilon\|^2-\frac{1}{2} \|\epsilon\|^2}
d\epsilon\\
&= |I+U_{ij}^2|^{-1/2}\times e^{-\frac{1}{2\sigma^2}
\|x_i-x_j\|^2_{(I+U_{ij}^2)^{-1}}},
\label{eq:ij}
\end{align}
where
	\begin{equation}\label{eq:U}
		U_{ij} = \frac{\Sigma_i^{1/2}-\Sigma_j^{1/2}}{\sigma}.
	\end{equation}
We now state our main result.

\begin{theorem}\label{thm1} The function
\begin{equation}\label{eq:kernel1}
\kappa((x_i,\Sigma_i),(x_j,\Sigma_j))=
|I+U_{ij}^2|^{-1/2}\times e^{-\frac{1}{2\sigma^2}
\|x_i-x_j\|^2_{(I+U_{ij}^2)^{-1}}}
\end{equation}
with $U_{ij}$ in \eqref{eq:U} defines a positive kernel on $\Omega$.
\end{theorem}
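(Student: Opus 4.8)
The plan is to read off positivity of $\kappa$ directly from positivity of the ordinary RBF kernel $k$ in \eqref{eq:gaussian}, exploiting the representation of $\kappa$ as an expectation that is already established in the computation leading to \eqref{eq:ij}. Because $k$ is a positive kernel, the Aronszajn/Mercer theorem recalled in Section~II furnishes a Hilbert space $\cH$ and a feature map $\varphi:\mR^n\to\cH$ with $k(x,y)=\langle\varphi(x),\varphi(y)\rangle_{\cH}$; we record that $\|\varphi(u)\|_{\cH}^2=k(u,u)=1$ for every $u\in\mR^n$, which will let us stay inside an $L^2$ space.

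The one point that must be set up with care is the coupling. I would fix a \emph{single} standard Gaussian vector $\epsilon\sim N(0,I)$ on $\mR^n$, with law $\gamma$, and use the \emph{same} $\epsilon$ for all data points at once, i.e.\ set $\xi_i:=x_i+\Sigma_i^{1/2}\epsilon$ for every $i$. With this shared noise, for each pair of indices $(i,j)$ the joint law of $(\xi_i,\xi_j)$ is exactly the one used to evaluate the Gaussian integral in the excerpt, so $\mE\{k(\xi_i,\xi_j)\}=\kappa((x_i,\Sigma_i),(x_j,\Sigma_j))$ holds \emph{simultaneously} for all $i,j$. (Note $I+U_{ij}^2\succeq I$ is invertible since $U_{ij}$ is symmetric, so the right-hand side of \eqref{eq:kernel1} is well defined; the Gaussian moment calculation itself is already carried out above.)

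Positivity is then immediate in either of two equivalent ways. First, for any $(x_1,\Sigma_1),\dots,(x_N,\Sigma_N)\in\Omega$ and any $\alpha_1,\dots,\alpha_N\in\mR$, linearity of expectation (legitimate since $0\le k\le 1$, so the finite sum is integrable) together with the elementary identity used in the necessity argument of Section~II gives
\[
\sum_{i,j=1}^{N}\alpha_i\alpha_j\,\kappa\big((x_i,\Sigma_i),(x_j,\Sigma_j)\big)=\mE\Big\{\Big\|\sum_{i=1}^{N}\alpha_i\varphi(\xi_i)\Big\|_{\cH}^{2}\Big\}\geq 0.
\]
Second, one can exhibit an explicit feature map for $\kappa$: associate to $(x,\Sigma)\in\Omega$ the $\cH$-valued function $\Phi_{(x,\Sigma)}:\epsilon\mapsto\varphi\big(x+\Sigma^{1/2}\epsilon\big)$, which lies in $L^2(\mR^n,\gamma;\cH)$ because $\|\varphi\|_{\cH}\equiv 1$; then
\[
\kappa\big((x_i,\Sigma_i),(x_j,\Sigma_j)\big)=\big\langle\Phi_{(x_i,\Sigma_i)},\Phi_{(x_j,\Sigma_j)}\big\rangle_{L^2(\gamma;\cH)},
\]
so $\kappa$ is literally a Gram kernel on the Hilbert space $L^2(\gamma;\cH)$, hence positive, and this in fact yields a concrete reproducing-kernel realization.

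I do not expect a genuine analytic obstacle here: the only real computation is the Gaussian moment evaluation behind \eqref{eq:kernel1}, which is already supplied. The step most prone to error is the bookkeeping of the coupling — one must insist that all $\xi_i$ share the \emph{one} noise vector $\epsilon$ rather than using independent copies, because it is precisely this shared-noise coupling that makes the pairwise identity \eqref{eq:ij} hold consistently across all indices and hence makes the sum over $i,j$ above valid; a secondary, trivial check is that $k(u,u)=1$, so that each $\Phi_{(x,\Sigma)}$ really belongs to $L^2(\gamma;\cH)$ and the expectations are finite. By contrast, a purely algebraic attack — trying to factor \eqref{eq:kernel1} via the Schur product theorem into a product of positive kernels in the $x$- and $\Sigma$-components separately — is awkward, since the quadratic form $\|x_i-x_j\|^2_{(I+U_{ij}^2)^{-1}}$ entangles the mean and covariance data; the probabilistic representation is the natural device that sidesteps this.
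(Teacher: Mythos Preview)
Your proposal is correct and follows essentially the same route as the paper: exploit the shared-noise coupling $\xi_i=x_i+\Sigma_i^{1/2}\epsilon$ to write $\kappa$ as $\mE\{k(\xi_i,\xi_j)\}$ and then conclude positivity from $\sum_{i,j}\alpha_i\alpha_j\kappa=\mE\{\|\sum_i\alpha_i\varphi(\xi_i)\|_{\cH}^2\}\ge 0$. Your write-up is in fact more careful than the paper's---you make the shared-$\epsilon$ coupling explicit and add the integrability check---and your alternative $L^2(\gamma;\cH)$ feature-map realization is a nice bonus not in the original.
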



\begin{proof} Consider $k(x,y)$ in \eqref{eq:gaussian} for  $x,y\in\mR^n$. Then, for any collection $\alpha_i\in \mathbb R$ and any collection of $(x_i,\Sigma_i)\in \Omega$, for $i\in\{1,\ldots,N\}$,
\begin{align*}
&\sum_{i,j=1}^N \alpha_i\alpha_j \kappa((x_i,\Sigma_i),(x_j,\Sigma_j))\\
&\hspace*{1cm} =\mE\{\sum_{i,j=1}^N \alpha_i\alpha_j k(\xi_i,\xi_j)\}\\
&\hspace*{1cm} =\mE\{
\langle \sum_{i=1}^N \alpha_i\varphi(\xi_i),
            \sum_{j=1}^N \alpha_j\varphi(\xi_i)
            \rangle_{\mathcal H}\} \geq 0,
\end{align*}
with $\varphi(\cdot)$ the map to the radial basis functions corresponding to the positive
kernel \eqref{eq:gaussian}. The claim in the theorem follows.
\end{proof}

The kernel \eqref{eq:kernel1} partially encapsulates the topology of the manifold $\Omega$. Indeed, when $\sigma$ is relative large compared with $\Sigma_i,\Sigma_j$, then $\kappa((x_i,\Sigma_i),(x_j,\Sigma_j))$ decreases as the difference between $\Sigma_i$ and $\Sigma_j$ increases. Thus, $(x_i,\Sigma_i)$ and $(x_j,\Sigma_j)$ become further to each other in the feature space.

With the probabilistic kernel $\kappa (\cdot,\cdot)$, we can then formulate the PkSVM over the space of Gaussian data points space $\Omega$. In particular, given $N$ data points $\{(x_i,\Sigma_i), y_i\}$, the dual formulation of PkSVM reads
	\begin{subequations}\label{eq:PkSVM}
	\begin{eqnarray}
	\hspace{-0.5cm}\max_{c_1,\ldots,c_N} &&\!\!\!\sum _{i=1}^N c_i\!-\!{\frac {1}{2}}\!\sum _{i,j=1}^Ny_{i}y_{j}c_{i}c_{j}\kappa ((x_{i},\Sigma_i),(x_{j},\!\Sigma_j))
	\\
	\mbox{s.t.} &&
	\sum_{i=1}^Nc_iy_i=0,
	\\&& 0\leq c_i\leq (2N\lambda)^{-1},~~1\le i \le N.
	\end{eqnarray}
	\end{subequations}
Once the coefficients $\{c_1,c_2,\ldots,c_N\}$ are learned, for any point $(x,\Sigma)\in \Omega$, the classification rule is
\begin{equation}
x\to {\rm sign}( \sum_{i=1}^Nc_iy_i \kappa((x_i,\Sigma_i), (x,\Sigma))-b),
\end{equation}
where
	\begin{equation}
	b=\sum_{i=1}^N c_iy_i\kappa((x_i,\Sigma_i),(x_j,\Sigma_j)) - y_j.
	\end{equation}
It can be seen that when applied to points in $\Omega$ having zero uncertainly, i.e., when the corresponding covariance matrices are identically $0$, then the Probabilistic kernel Support Vector Machine model reduces to the standard one where points lie in $\mR^n$. That is,
\[
\kappa((x_i,0),(x_j,0))=k(x_i,x_j).
\]
Thereby, PkSVM is natural extension of standard SVM, able to account for error and uncertainty that is available and encoded in the data.

\section{Numerical Example}\label{sec:example}
In this section, we present a simple example to highlight the PkSVM framework we developed. Consider a synthetic dataset generated as follows. One cluster, labeled by $y_i = 1$, consists of 200 data points $\{x_i\,\mid\,1\le i\le 200\}$ uniformly sampled over a 2D unit disk. The covariance for each of these samples is set to be
	\[
		\Sigma_i=\Sigma_L = \left[\begin{matrix}
		0.01 & 0 \\ 0 & 0.01
		\end{matrix}\right],
	\]
that is, they are of low uncertainty. Another cluster, labeled by $y_i=-1$, consists of 200 data points $\{x_i\,\mid\,201\le i \le 400\}$ uniformly sampled over the annulus $\{x\,\mid\, 1\le \|x\|\le 2\}$. Their covariances are set to be
	\[
		\Sigma_i=\Sigma_H = \left[\begin{matrix}
		0.09 & 0 \\ 0 & 0.09
		\end{matrix}\right],
	\]
that is, they have higher uncertainties. This dataset $Z=\{(x_i,\Sigma_i),y_i\,\mid\, 1\le i\le 400\}$ is depicted in Figure \ref{fig:dataset}.
\begin{figure}[h]
\centering
\includegraphics[width=0.4\textwidth]{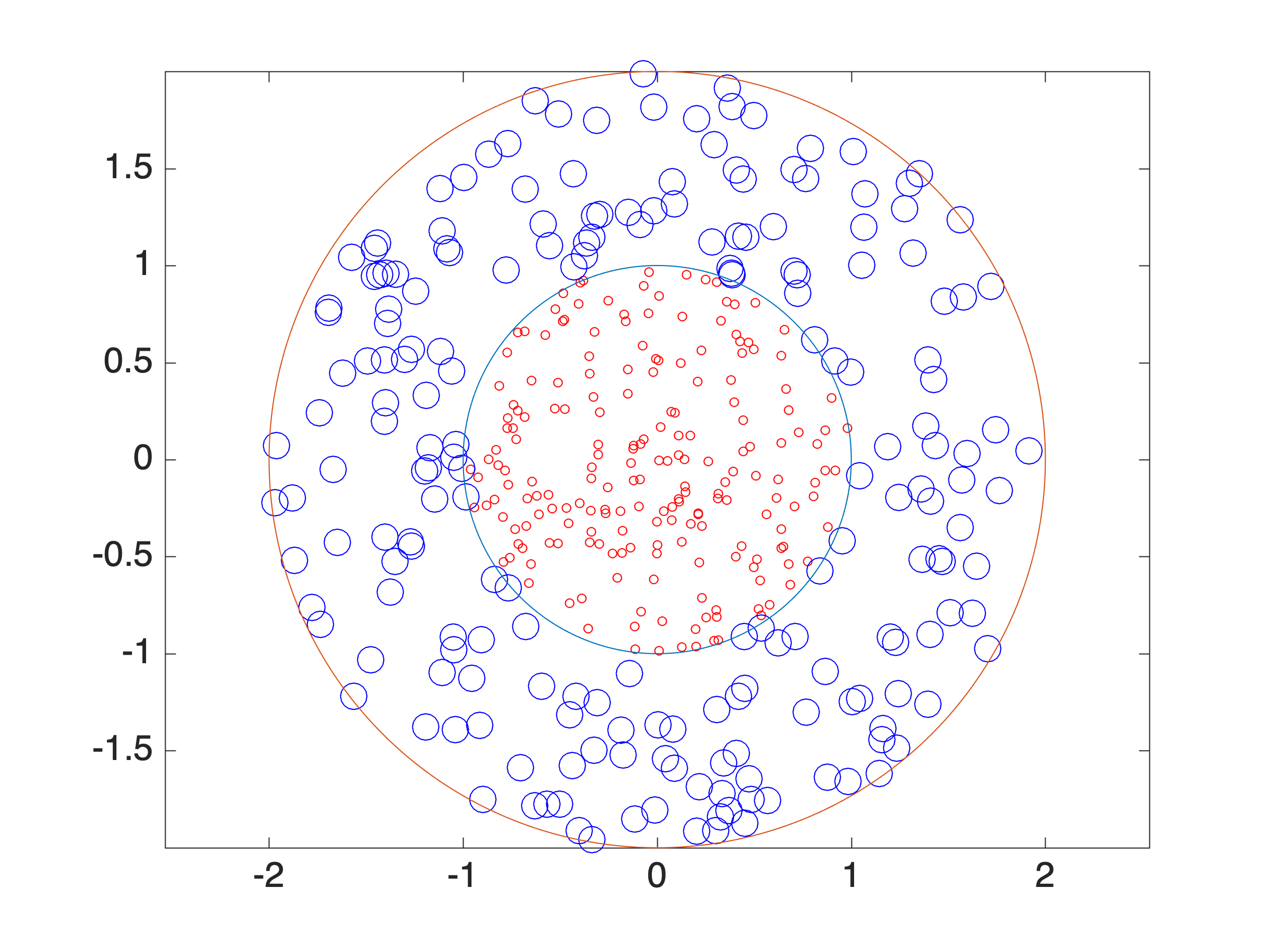}
\caption{The intensity of uncertainty of the data points are captured by the size of the circles. The red circles denote the covariances of the cluster with label 1. The blue circles denote the covariances of the cluster with label -1. The blue data points have higher uncertainties.}
\label{fig:dataset}
\end{figure}

We train the PkSVM model \eqref{eq:PkSVM} with the above dataset $Z$. The parameters are set to be $\lambda = 0.001, \sigma = 1$. To illustrate the trained model, we apply it to predict the score of any test data $(x,\Sigma)\in\Omega$ with $x\in [-2,\,2]\times[-2,\,2]$. In particular, we fix $\Sigma$ and predict the score of $(x,\Sigma)$ over a grid. When $\Sigma=\Sigma_L$, the result is illustrated in Figure \ref{fig:lowcov}. When $\Sigma=\Sigma_H$, the result is shown in Figure \ref{fig:highcov}. The three black curves are contours of the scores corresponding to values -0.5, 0, 0.5 respectively. Recall that the score is always between -1 and 1, and a score closer to 1 (-1) indicates that the data point is more likely to have label 1 (-1).
\begin{figure}[h]
\centering
\includegraphics[width=0.4\textwidth]{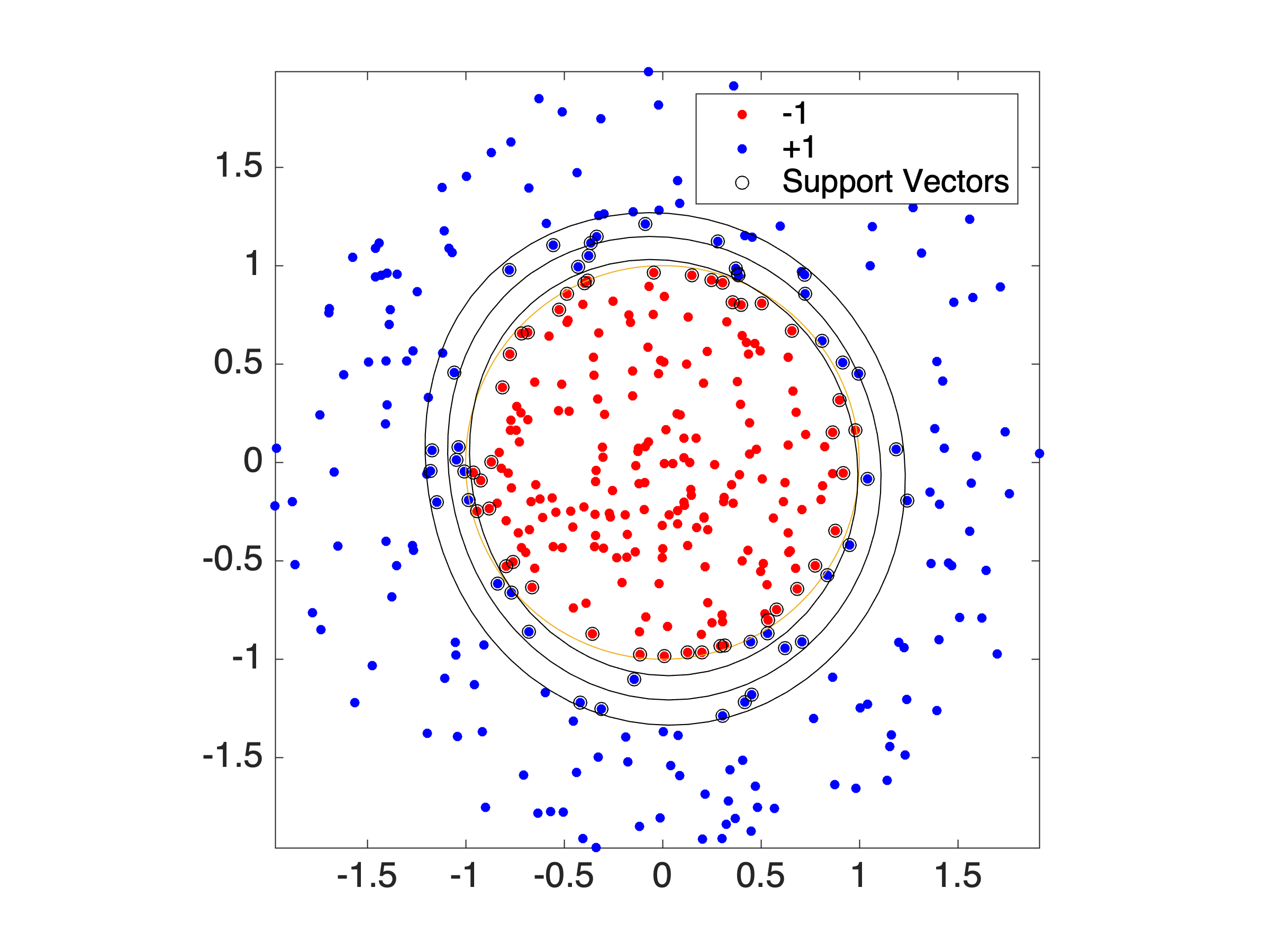}
\caption{Prediction contours of testing data with low uncertainty $\Sigma_L$}
\label{fig:lowcov}
\end{figure}
\begin{figure}[h]
\centering
\includegraphics[width=0.4\textwidth]{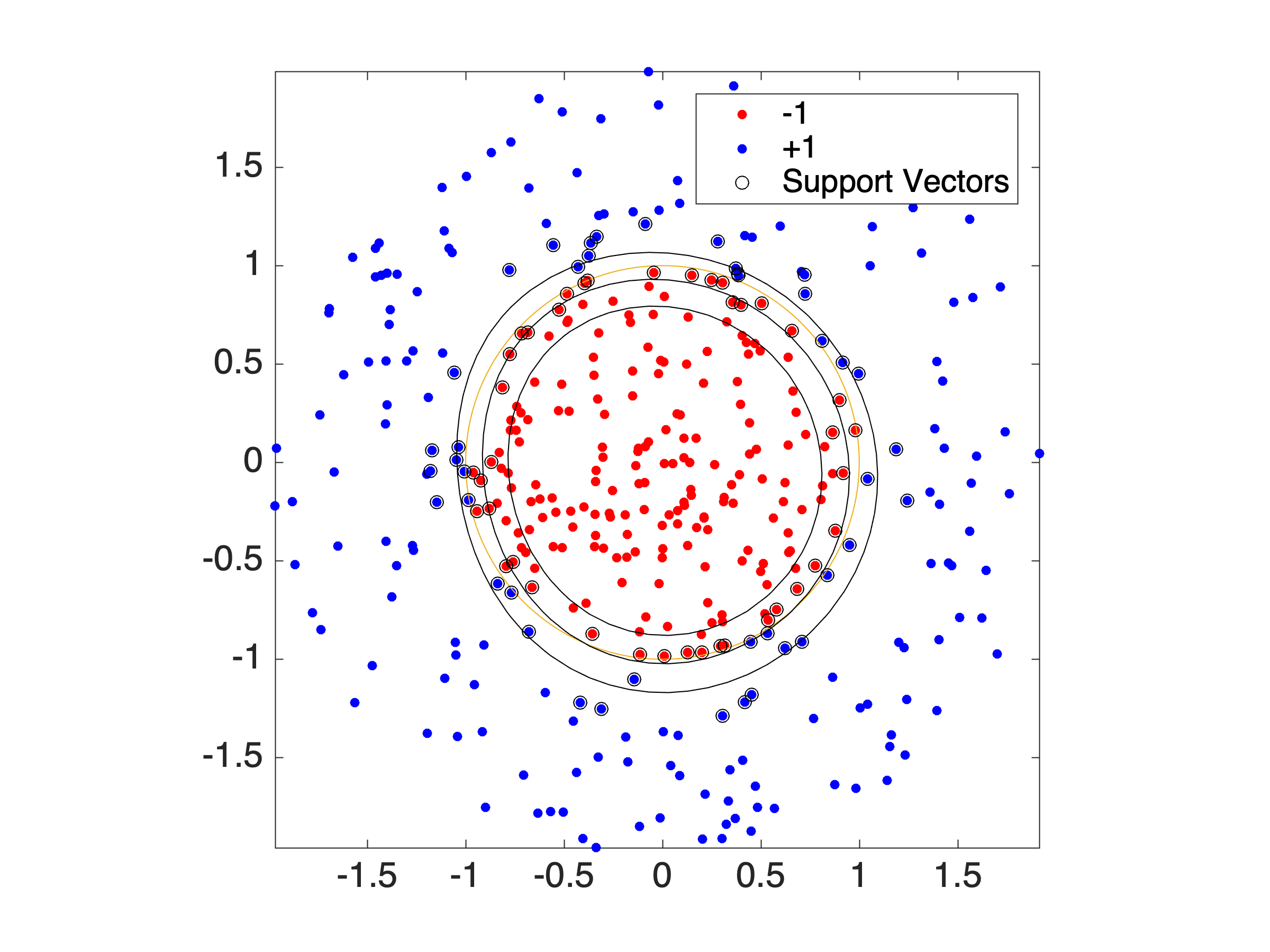}
\caption{Prediction contours of testing data with low uncertainty $\Sigma_H$}
\label{fig:highcov}
\end{figure}

As can be observed from Figure~\ref{fig:lowcov}-\ref{fig:highcov}, when $\Sigma=\Sigma_H$, the boundary contours tend to go inward the disk. This is consistent with the intuition of the PkSVM framework. Indeed, since $\Sigma=\Sigma_H$, with our kernel \eqref{eq:kernel1}, the data on the boundary shows higher similarity with those in the $y_i=-1$ cluster than those in the $y_i=1$ cluster. Intuitively, the data point is closer to the cluster $y_i=-1$ in the feature space. Thus, data with higher uncertainty will be more likely to be labeled by -1. Similarly, the boundary contours tend to go outward the disk when $\Sigma=\Sigma_L$; the rationale is exactly the same.

For the sake of comparison, we also run a standard SVM (Figure \ref{fig:SVM}) over the same dataset $\{x_i, y_i\,\mid\,1\le i\le 400\}$ without the covariances $\{\Sigma_i\}$. Note that standard SVM cannot process the extra information introduced by the covariances.
\begin{figure}[h]
\centering
\includegraphics[width=0.4\textwidth]{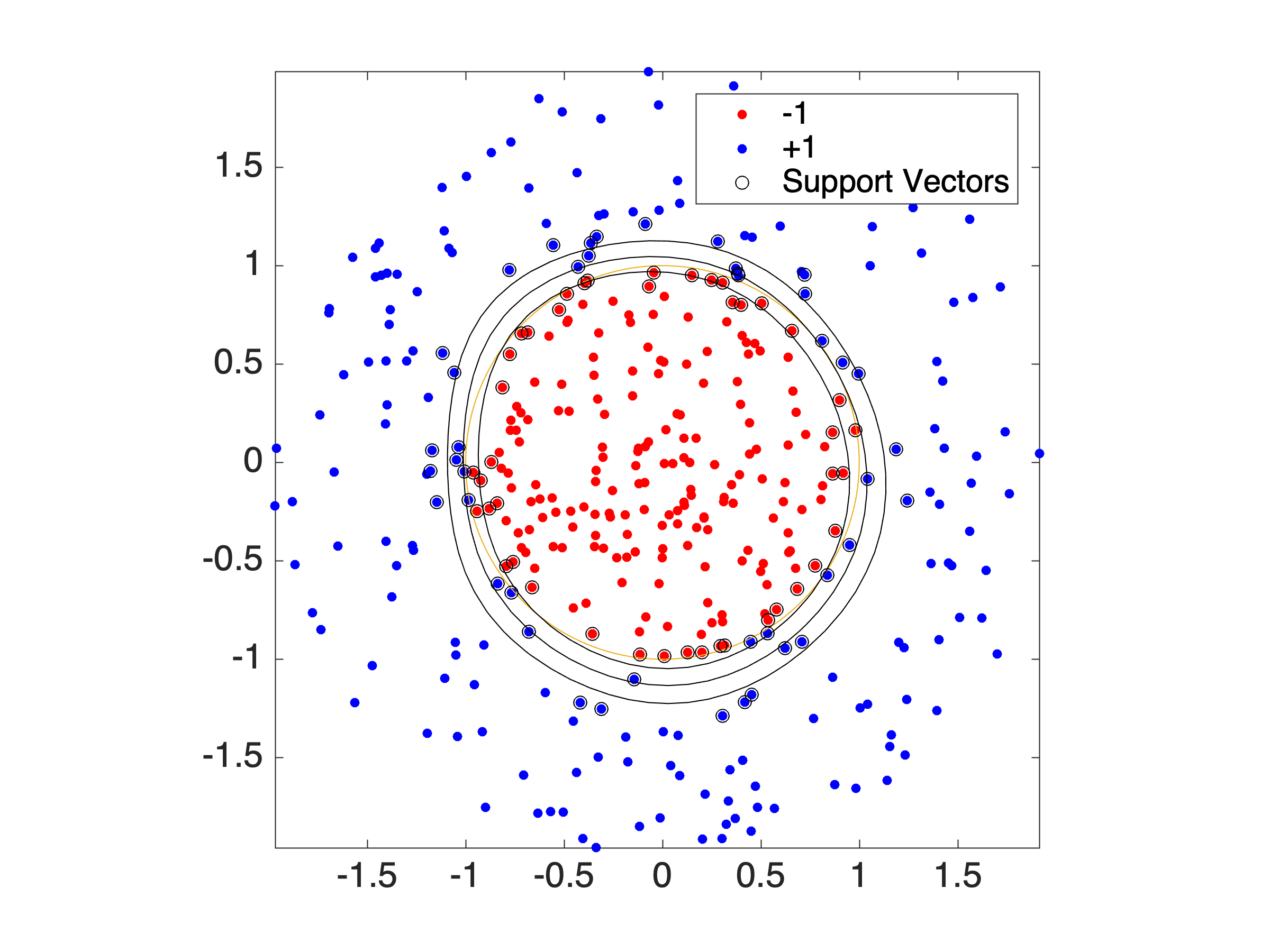}
\caption{prediction contours of standard SVM}
\label{fig:SVM}
\end{figure}

We carried out the same procedures to train the PkSVM model with multiple other datasets. These are illustrated in Figure \ref{fig:eg2} and \ref{fig:eg3}. In particular, the data points in Figure \ref{fig:eg2} are generated in the same way as above except for that the covariances of the red points are
	\[
		\Sigma_R = \left[\begin{matrix}
		0.09 & 0 \\ 0 & 0.01
		\end{matrix}\right],
	\]
and the covariances of the blue points are
	\[
		\Sigma_B = \left[\begin{matrix}
		0.01 & 0 \\ 0 & 0.09
		\end{matrix}\right].
	\]
\begin{figure}[ht]
\centering
\begin{subfigure}{.239\textwidth}
\centering
  \includegraphics[width=1.1\linewidth]{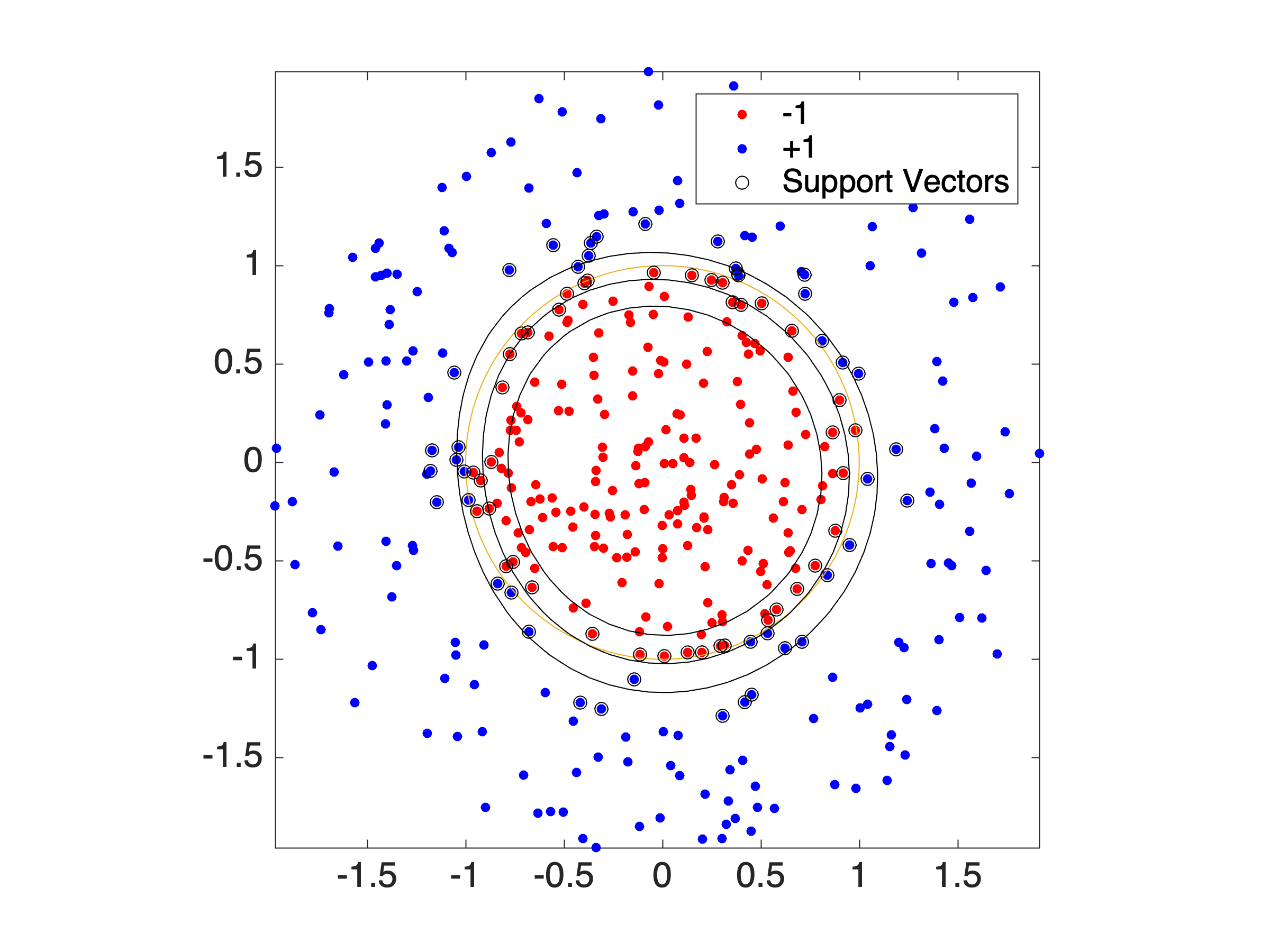}
  \caption{$\Sigma=\Sigma_B$}
  \label{fig:eg2a}
\end{subfigure}
\hspace{-0.5cm}
\begin{subfigure}{.239\textwidth}
\centering
  \includegraphics[width=1.1\linewidth]{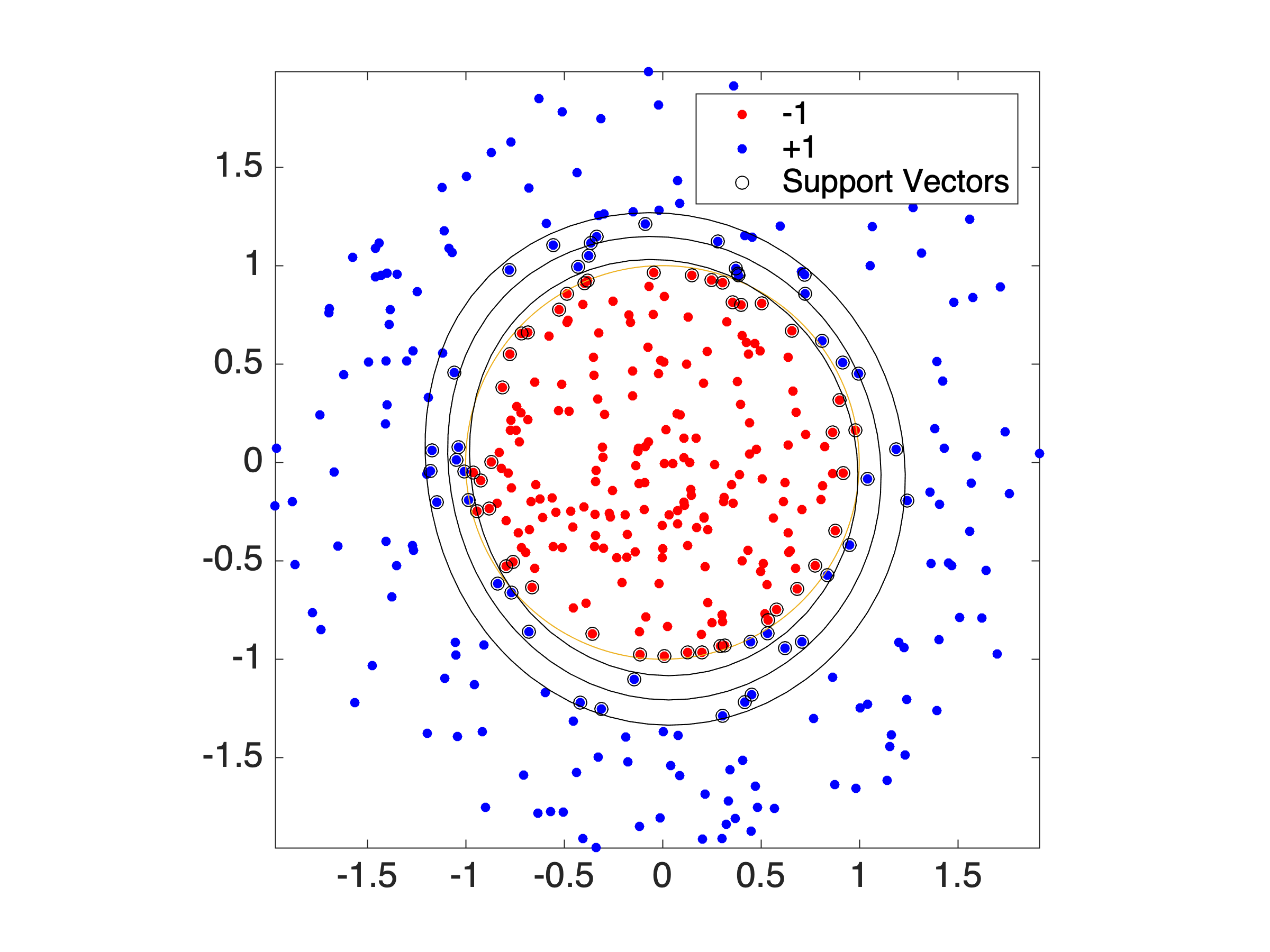}
  \caption{$\Sigma=\Sigma_R$}
  \label{fig:eg2b}
\end{subfigure}
\caption{Prediction contours with different uncertainties}
\label{fig:eg2}
\end{figure}
As can be seen from Figure~\ref{fig:eg2}, when the uncertainties of the testing data are the same as the blue (red) training data, then the prediction boundary tends to go inward (outward), which is consistent with the intuition behind the PkSVM. Similar observations can be drawn in Figure~\ref{fig:eg3} for a different dataset.

\begin{figure}[ht]
\centering
\begin{subfigure}{.239\textwidth}
\centering
  \includegraphics[width=1.1\linewidth]{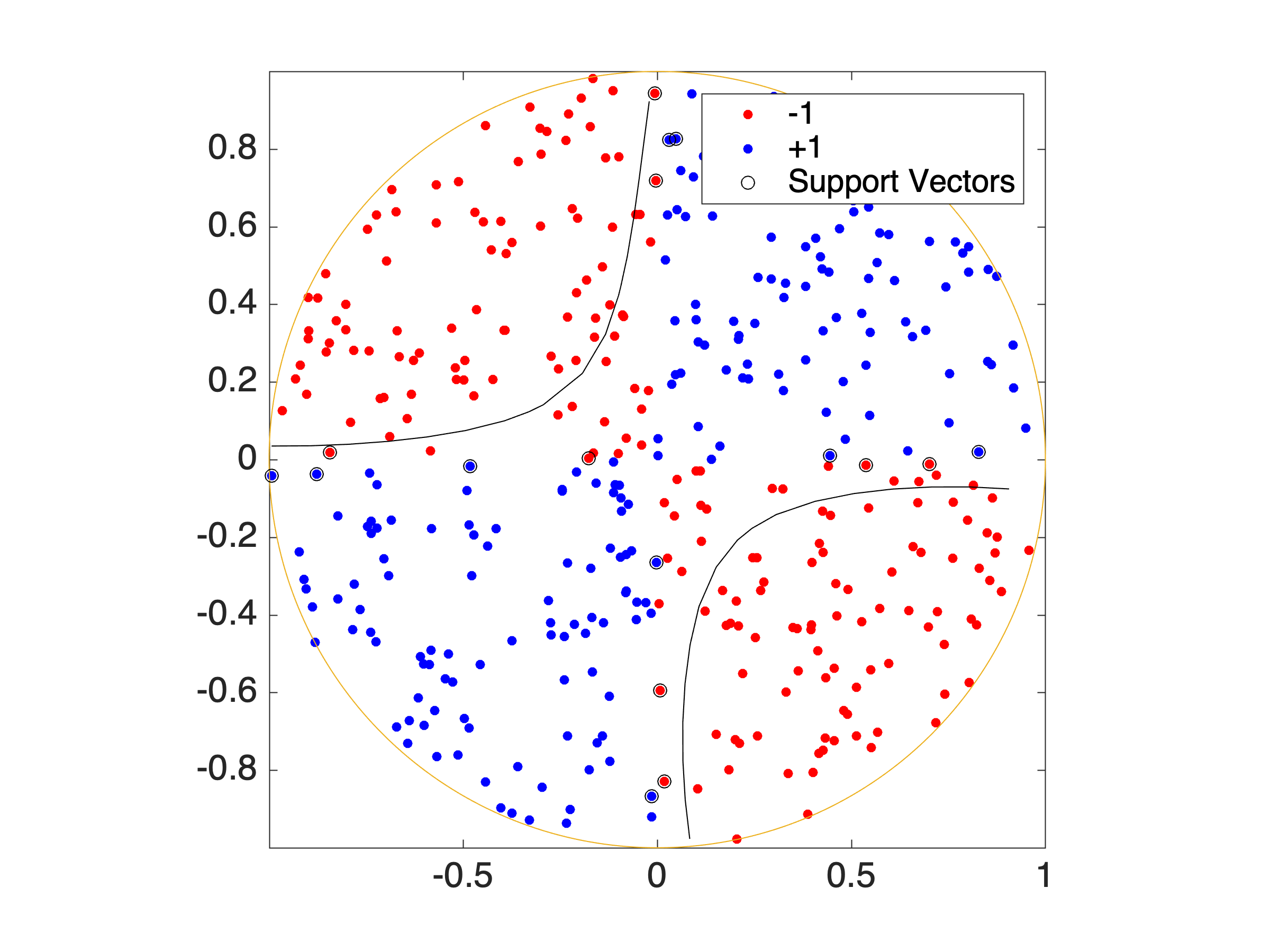}
  \caption{$\Sigma=\Sigma_B$}
  \label{fig:eg3a}
\end{subfigure}
\hspace{-0.5cm}
\begin{subfigure}{.239\textwidth}
\centering
  \includegraphics[width=1.1\linewidth]{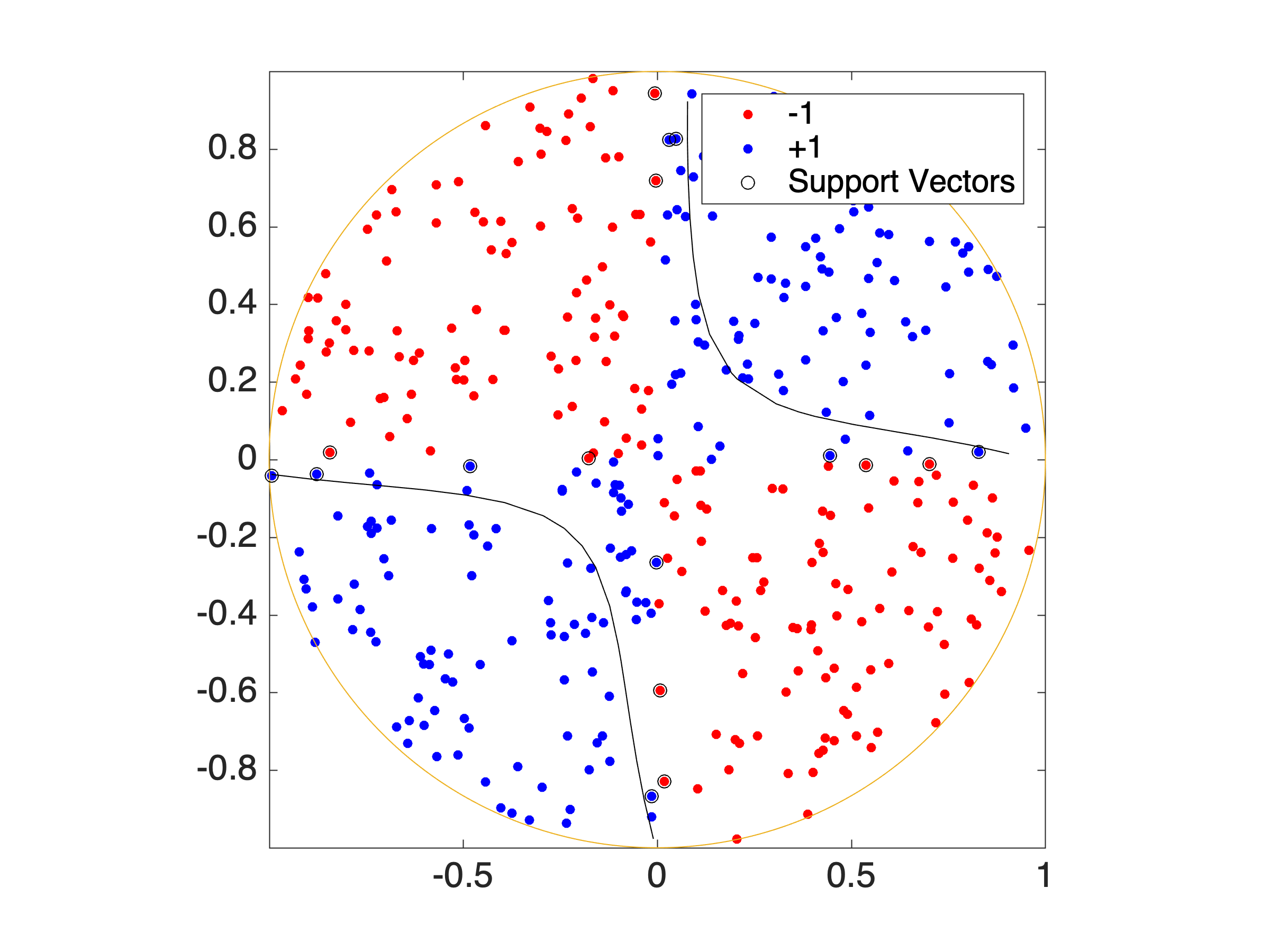}
  \caption{$\Sigma=\Sigma_R$}
  \label{fig:eg3b}
\end{subfigure}
\caption{Prediction contours with different uncertainties}
\label{fig:eg3}
\end{figure}

\section{Conclusion}\label{sec:conclusion}
The formalism herein presents a new paradigm where data incorporate a quantification of their own uncertainty. We focused on binary classification and the case of ``Gaussian points'' to present a proof-of-concept, in the form of a suitable kernel in Theorem~\ref{thm1}. Numerical experiments are provided to illustrate our framework. The basic idea appears to be easily generalizable to more detailed and explicit descriptions of uncertainty, e.g., Gaussian Mixture models, with however, the caveat of added complexity in the resulting formulae. This we expect will be the starting point of future investigations. Interestingly, the Gaussian points data structure coincides with the diffusion tensor widely used in diffusion tensor imaging (DTI) \cite{LeManPouCha01,FarCheGeoLen16}, an important technique in magnetic resonant imaging (MRI). Thus, another promising direction lies in the interface between our framework and DTI.

\section*{Acknowledgments}
This research was funded through NSF under grants 1901599, 1807664, 1839441, 
AFOSR under grant FA9550-20-1-0029, National Institutes of Health grant RF1 AG053991, and the Breast Cancer Research Foundation.

\bibliographystyle{IEEEtran}
\bibliography{refs}

\begin{thebibliography}{10}
\providecommand{\url}[1]{#1}
\csname url@samestyle\endcsname
\providecommand{\newblock}{\relax}
\providecommand{\bibinfo}[2]{#2}
\providecommand{\BIBentrySTDinterwordspacing}{\spaceskip=0pt\relax}
\providecommand{\BIBentryALTinterwordstretchfactor}{4}
\providecommand{\BIBentryALTinterwordspacing}{\spaceskip=\fontdimen2\font plus
\BIBentryALTinterwordstretchfactor\fontdimen3\font minus
  \fontdimen4\font\relax}
\providecommand{\BIBforeignlanguage}[2]{{%
\expandafter\ifx\csname l@#1\endcsname\relax
\typeout{** WARNING: IEEEtran.bst: No hyphenation pattern has been}%
\typeout{** loaded for the language `#1'. Using the pattern for}%
\typeout{** the default language instead.}%
\else
\language=\csname l@#1\endcsname
\fi
#2}}
\providecommand{\BIBdecl}{\relax}
\BIBdecl

\bibitem{SteChr08}
I.~Steinwart and A.~Christmann, \emph{Support vector machines}.\hskip 1em plus
  0.5em minus 0.4em\relax Springer Science \& Business Media, 2008.

\bibitem{chang2011libsvm}
C.-C. Chang and C.-J. Lin, ``{LIBSVM}: A library for support vector machines,''
  \emph{ACM transactions on intelligent systems and technology (TIST)}, vol.~2,
  no.~3, pp. 1--27, 2011.

\bibitem{Suy01}
J.~A. Suykens, ``Support vector machines: a nonlinear modelling and control
  perspective,'' \emph{European Journal of Control}, vol.~7, no. 2-3, pp.
  311--327, 2001.

\bibitem{kung2014kernel}
S.~Y. Kung, \emph{Kernel methods and machine learning}.\hskip 1em plus 0.5em
  minus 0.4em\relax Cambridge University Press, 2014.

\bibitem{wang2014survey}
X.~Wang and P.~M. Pardalos, ``A survey of support vector machines with
  uncertainties,'' \emph{Annals of Data Science}, vol.~1, no. 3-4, pp.
  293--309, 2014.

\bibitem{bi2005support}
J.~Bi and T.~Zhang, ``Support vector classification with input data
  uncertainty,'' in \emph{Advances in neural information processing systems},
  2005, pp. 161--168.

\bibitem{trafalis2007robust}
T.~B. Trafalis and R.~C. Gilbert, ``Robust support vector machines for
  classification and computational issues,'' \emph{Optimisation Methods and
  Software}, vol.~22, no.~1, pp. 187--198, 2007.

\bibitem{xie2018uncertain}
Z.~Xie, Y.~Xu, and Q.~Hu, ``Uncertain data classification with additive kernel
  support vector machine,'' \emph{Data \& Knowledge Engineering}, vol. 117, pp.
  87--97, 2018.

\bibitem{feragen2015geodesic}
A.~Feragen, F.~Lauze, and S.~Hauberg, ``Geodesic exponential kernels: When
  curvature and linearity conflict,'' in \emph{Proceedings of the IEEE
  Conference on Computer Vision and Pattern Recognition}, 2015, pp. 3032--3042.

\bibitem{scholkopf1999advances}
B.~Sch{\"o}lkopf, C.~J. Burges, A.~J. Smola \emph{et~al.}, \emph{Advances in
  kernel methods: support vector learning}.\hskip 1em plus 0.5em minus
  0.4em\relax MIT press, 1999.

\bibitem{cover1965geometrical}
T.~M. Cover, ``Geometrical and statistical properties of systems of linear
  inequalities with applications in pattern recognition,'' \emph{IEEE
  transactions on electronic computers}, no.~3, pp. 326--334, 1965.

\bibitem{aronszajn1950theory}
N.~Aronszajn, ``Theory of reproducing kernels,'' \emph{Transactions of the
  American mathematical society}, vol.~68, no.~3, pp. 337--404, 1950.

\bibitem{alpay2015advanced}
D.~Alpay, ``An advanced complex analysis problem book,'' \emph{Topological
  vector spaces, functional analysis, and Hilbert spaces of analytic functions.
  Birk{\"a}user Basel}, 2015.

\bibitem{scholkopf2001learning}
B.~Sch\"olkopf and A.~J. Smola, \emph{Learning with kernels: support vector
  machines, regularization, optimization, and beyond}.\hskip 1em plus 0.5em
  minus 0.4em\relax MIT press, 2001.

\bibitem{wiki}
\BIBentryALTinterwordspacing
\emph{Support-vector machine}, 2020 (accessed January 28, 2020). [Online].
  Available: \url{https://en.wikipedia.org/wiki/Support-vector_machine}
\BIBentrySTDinterwordspacing

\bibitem{LeManPouCha01}
D.~Le~Bihan, J.-F. Mangin, C.~Poupon, C.~A. Clark, S.~Pappata, N.~Molko, and
  H.~Chabriat, ``Diffusion tensor imaging: concepts and applications,''
  \emph{Journal of Magnetic Resonance Imaging: An Official Journal of the
  International Society for Magnetic Resonance in Medicine}, vol.~13, no.~4,
  pp. 534--546, 2001.

\bibitem{FarCheGeoLen16}
H.~Farooq, Y.~Chen, T.~T. Georgiou, and C.~Lenglet, ``Some geometric ideas for
  feature enhancement of diffusion tensor fields,'' in \emph{2016 IEEE 55th
  Conference on Decision and Control (CDC)}.\hskip 1em plus 0.5em minus
  0.4em\relax IEEE, 2016, pp. 3856--3861.

\end{thebibliography}
\end{document}